\newcolumntype{d}[1]{D..{#1}}
\newcolumntype{C}{>{\centering\arraybackslash}X}
\newcommand{\E}{\mathbb{E}}
\newcommand{\Bcal}{\mathcal{B}}
\newcommand{\Ccal}{\mathcal{C}}
\newcommand{\Dcal}{\mathcal{D}}
\newcommand{\Ocal}{\mathcal{O}}
\newcommand{\Pcal}{\mathcal{P}}
\newcommand{\Rcal}{\mathcal{R}}
\newcommand{\Tcal}{\mathcal{T}}
\newcommand{\Vcal}{\mathcal{V}}
\newcommand{\eqn}[1]{\begin{align} #1 \end{align}}
\DeclareMathOperator*{\argmax}{arg\,max}
\theoremstyle{plain}
\newtheorem{lemma}{Lemma}
\theoremstyle{definition}
\newtheorem{definition}{Definition}
\newtheorem*{overallprob*}{Overall Problem}
\theoremstyle{remark}
\definecolor{yellow}{cmyk}{0.0,0.10,0.95,0.0}
\definecolor{pred}{cmyk}{0,0.8,0.70,0.0}
\definecolor{bluedefined}{cmyk}{0.46, 0.10, 0, 0.0}
\def\BState{\State\hskip-\ALG@thistlm}
\theoremstyle{plain}
\newtheorem{thm}{Theorem}
\theoremstyle{plain}
\newtheorem{prob}{Problem}
\newtheorem{overprob}{Overall}
\theoremstyle{definition}
\newtheorem{assumption}{Assumption}
\newcolumntype{M}[1]{>{\centering\arraybackslash}m{#1}}
\newcommand{\RNum}[1]{\uppercase\expandafter{\romannumeral #1\relax}}
\algrenewcommand\textproc{}
\algrenewcommand\algorithmicrequire{\textbf{Input:}}
\algrenewcommand\algorithmicensure{\textbf{Output:}}
\author{Kaleb Ben Naveed$^{1}$, Haejoon Lee$^{1}$ and Dimitra Panagou$^{1,2}$%
\thanks{$^{*}$The authors would like to acknowledge the support of the National Science Foundation (NSF) under grant no. 2223845 and grant no. 1942907.}
\thanks{$^{1}$Department of Robotics, University of Michigan, Ann Arbor, MI, 48109 USA. 
{\tt\small \{kbnaveed@umich.edu\}}}
\thanks{$^{2}$Department of Aerospace Engineering, University of Michigan, Ann Arbor, MI, 48109 USA. }}
\title{\LARGE \bf
Multi-Robot Allocation for Information Gathering in Non-Uniform Spatiotemporal Environments
} 
\newcommand\footnoteref[1]{\protected@xdef\@thefnmark{\ref{#1}}\@footnotemark}
\begin{document}

\maketitle
\thispagestyle{empty}
\pagestyle{empty}

\begin{abstract}
Autonomous robots are increasingly deployed to estimate spatiotemporal fields (e.g., wind, temperature, gas concentration) that vary across space and time. We consider environments divided into non-overlapping regions with distinct spatial and temporal dynamics, termed non-uniform spatiotemporal environments. Gaussian Processes (GPs) can be used to estimate these fields. The GP model depends on a kernel that encodes how the field co-varies in space and time, with its spatial and temporal lengthscales defining the correlation. Hence, when these lengthscales are incorrect or do not correspond to the actual field, the estimates of uncertainty can be highly inaccurate. Existing GP methods often assume one global lengthscale or update only periodically; some allow spatial variation but ignore temporal changes. To address these limitations, we propose a two-phase framework for multi-robot field estimation. Phase 1 uses a variogram-driven planner to learn region-specific spatial lengthscales. Phase 2 employs an allocation strategy that reassigns robots based on the current uncertainty, and updates sampling as temporal lengthscales are refined. For encoding uncertainty, we utilize clarity, an information metric from our earlier work. We evaluate the proposed method across diverse environments and provide convergence analysis for spatial lengthscale estimation, along with dynamic regret bounds quantifying the gap to the oracle’s allocation sequence.
\end{abstract}



\section{Introduction}
Autonomous robots are increasingly used in environmental monitoring \cite{agrawal2024multi, env_monitoring_2}, search and rescue \cite{moon2022tigris, search_2}, and 3D reconstruction \cite{reconstruction_1, reconstruction_2}, where they must actively gather information to estimate an unknown field. Unlike passive sensing, this requires sequential decision-making, i.e., deciding where and when to sample to reduce uncertainty~\cite{luo2024act}. In \textit{spatiotemporal environments}, where the quantity of interest evolves over space and time, the challenge intensifies. We focus on \textbf{\textit{non-uniform spatiotemporal environments}}, where the domain is partitioned into non-overlapping regions differing in spatial and temporal variability, as shown in~\cref{fig:mot_prop}(a).

Our goal is to estimate the spatiotemporal field by selecting informative sampling locations for collecting measurements and fusing them into a Gaussian Process (GP) representation that predicts the field and quantifies its uncertainty. GPs are widely used in such problems \cite{GPR_informative_1, GPR_informative_3, agrawal2024multi, SpaTemp_2, SpaTemp_3, SpaTemp_4}, although occupancy grids \cite{SpaTemp_Occupancy} and stochastic cell models \cite{Naveed-ICRA-2024} have also been explored. GPs are particularly effective for capturing spatial and temporal correlations via kernels parameterized by lengthscales. However, misspecified lengthscales lead to poor predictions, especially in non-uniform environments. In our setting, each region is modeled with constant spatial and temporal lengthscales, capturing local variability while differing across regions.

Most GP-based methods assume a single global lengthscale \cite{agrawal2024multi, SpaTemp_2, SpaTemp_3, kontoudis2023adaptive} or update GPs periodically \cite{SpaTemp_1}, limiting performance in non-uniform settings. Some allow spatially-varying lengthscales, but ignore temporal variability \cite{Ak-RSS-22}. Others represent the environment using a discretized grid, modeling each grid cell as an independent stochastic process and thereby not leveraging spatial and temporal correlations \cite{Naveed-ICRA-2024}. To our knowledge, GP-based robotic information gathering has not addressed environments with regionally varying spatial and temporal correlations.

To address this gap, we propose a \textbf{\textit{two-phase framework}}. In Phase 1, spatial lengthscales are estimated. We introduce a planner that minimizes the variance of the estimate while ensuring sufficient spatial coverage. This coverage is critical, as variogram-based methods infer correlation by comparing measurements at different distances; without a range of such distances, the estimates can become unreliable. The estimated spatial lengthscales for all regions are then fixed for the remainder of the mission.

In Phase 2, the system operates in closed loop. Robots are allocated to regions with initial sampling locations. At each replanning step, collected data is used to update the region’s temporal lengthscale. These estimates, together with the fixed spatial ones from Phase 1, update the GP model. Robot assignments and sampling locations are then recomputed, adapting at every step to changes in temporal lengthscales, while treating the spatial lengthscale in each region as fixed.

To guide adaptive reallocation, we introduce a clarity-based strategy. Clarity, introduced in our earlier work in~\cite{Agrawal-CDC-23}, is a rescaling of differential entropy so that it takes values on $[0,1]$. This rescaling makes the quantification of the level of uncertainty in the estimate of the field both more intuitive and more amenable to computational derivations. Unlike existing approaches~\cite{Task_Allocation, jang2018anonymous, choudhury2022dynamic}, we formally pose a clarity-maximization problem and reformulate its NP-hard mixed-integer nonlinear program (MINLP) as a tractable mixed-integer linear program (MILP), enabling principled, real-time reallocation.

This framework addresses limitations in prior work. Methods such as ergodic search \cite{Naveed-ICRA-2024, Ergodic_1}, greedy selection \cite{Ak-RSS-22}, and sampling-based planning \cite{moon2022tigris} often yield uneven revisit rates and resolution, degrading estimation in non-uniform environments. Our method learns local structure and adjusts sampling effort accordingly.

\subsubsection*{\textbf{Notation}}
Let $\mathbb{R}$, $\mathbb{R}_{\geq 0}$, and $\mathbb{R}_{> 0}$ denote the set of real, non-negative real, and positive real numbers, respectively. Let $\mathbb{S}^{n}_{+}$ and $\mathbb{S}^{n}_{++}$ denote the sets of symmetric positive semi-definite and positive definite matrices in $\mathbb{R}^{n \times n}$. A Gaussian with mean $\mu$ and covariance $\Sigma \in \mathbb{S}_{++}^{n}$ is denoted by $\mathcal{N}(\mu, \Sigma)$. A L1-norm is defined as $\psi(x) = \frac{x}{\|x\|_1} = \frac{x}{\sum_{i=1}^n |x_i|}$.

\section{Preliminaries}\label{sec:prelim}

\subsection{Clarity}
We use \textit{clarity} \cite{Agrawal-CDC-23}, an information measure that quantifies the level
of information about a continuous random variable $X \in \mathbb{R}^n$ on a normalized $[0,1]$ scale. Let the differential entropy of $X$ be denoted $h[X]$ and defined as
\eqn{
\label{diffentropy}
h[X] &= -\int_S \rho(x)\log \rho(x) dx,
}
where $S$ is the support of $X$. The clarity of $X$ is denoted as $q[X]$ and is defined as
\eqn{
q[X] &= \left(1 + \frac{e^{2h[X]}}{(2\pi e)^n} \right)^{-1},
}
where $h[X]$ is given by \eqref{diffentropy}. Note that $q \rightarrow 1$ represents the case when $X$ is perfectly known, whereas lower values correspond to higher uncertainty. For $Y \sim \mathcal{N}(\mu, \Sigma)$, the differential entropy and clarity are given by $h[Y] = \log \sqrt{(2\pi e)^n |\Sigma|}$ and $q[Y] = \frac{1}{1 + |\Sigma|}$.

\subsection{Spatiotemporal Gaussian Process Modeling}
We model the spatiotemporal field $f : \mathbb{R}_{\geq 0} \times \mathcal{D} \to \mathbb{R}$ using Gaussian Process Regression (GPR), where $f(t, p)$ denotes the field value at time $t$ and location $p \in \mathcal{D} \subset \mathbb{R}^D$. The observations are modeled as
\eqn{
  y(t, p) = f(t, p) + \epsilon, \quad \epsilon \sim \mathcal{N}(0, \sigma^2_m),
}
with GP prior
\eqn{
  f(t, p) \sim \mathcal{GP}(0, k_{\Phi}(t, t', p, p')),
}
where $k_{\Phi}$ is the kernel with hyperparameters $\Phi$. The posterior at test point $(t_*, p_*)$ is Gaussian with
\begin{subequations}
\begin{align}
  \mu &= k_{f_*}^\top (K_{yy})^{-1} y, \\
  v &= k_{\Phi}(t_*, t_*, p_*, p_*) - k_{f_*}^\top (K_{yy})^{-1} k_{f_*},
\end{align}
\end{subequations}
where $K_{yy} = K_{ff} + \sigma^2_m I$, and $k_{f_*}$ contains kernel evaluations with the test point. Hyperparameters $\Phi$ are learned by maximizing the marginal likelihood \cite{edward2006rasmussen}. To reduce the $\mathcal{O}(N^3)$ training and $\mathcal{O}(N^2)$ prediction cost of GPR, we use a state-space formulation \cite{kf_gp_3}, which recasts GP inference as Kalman filtering under the following assumption:
\begin{assumption}
\label{assumption:1}
The kernel is separable: $k_{\Phi}(t, t', p, p') = k_T(t, t') k_S(p, p')$, and the temporal kernel $k_T$ is isotropic.
\end{assumption}
The spatiotemporal field is represented using a set of $N_G$ grid points, $\Pcal_G = \{p_{g,i}\}_{i=1}^{N_G} \subset \Dcal$, over the spatial domain. The state at each point is governed by an SDE, which become spatially correlated using a spatial kernel. The state of the environment at the $i^{th}$ grid point is given as
\begin{subequations}
\begin{align}
  ds_i(t) &= A s_i(t) dt + B dW_i(t), \\
  z_i(t) &= C s_i(t), \quad s_i(0) \sim \mathcal{N}(0, \Sigma), \\
    \mathbf{f}(t) &= \sqrt{K_{GG}} (I_{N_G} \otimes C) \mathbf{s},
\end{align}
\end{subequations}
where $\mathbf{f}(t) = [f(t, p_{g,1}), \dots, f(t, p_{g,N_G})]^\top \in \mathbb{R}^{N_G}$ represents the value of spatiotemporal field at each grid point, and $\mathbf{s} = [s_1^\top, \dots, s_{N_G}^\top]^\top \in \mathbb{R}^{n_k N_G}$ is the stacked vector representing the state at each grid point. Matrices $A$, $B$, $C$ depend on $k_T$, and $\Sigma \in \mathbb{S}_{++}^{n_k}$ satisfies $A\Sigma + \Sigma A^\top = -BB^\top$. The spatial correlation is encoded in $K_{GG}$ with $[K_{GG}]_{ij} = k_S(p_{g,i}, p_{g,j})$.  We refer to this modeling approach as a spatiotemporal Gaussian process Kalman filtering (STGPKF).

\subsection{Clarity Dynamics for the Spatiotemporal Field}
\label{sec:clarity_dynamics_prelim}
To handle measurements at off-grid locations,~\cite{agrawal2024multi} adjusts the Kalman filter update to account for the mismatch with grid points. Consequently, at a fixed $p \in \mathcal{D}$:
\begin{subequations}
\begin{align}
  \dot{s} &= A s + B w, \quad w(t) \sim \mathcal{N}(0, I), \\
  y &= L s + v, \quad v(t) \sim \mathcal{N}(0, V \Delta T),
\end{align}
\end{subequations}
where $s \in \mathbb{R}^{n_k}$ is the environment state, $r = \Psi(x)$ is the robot position, where $\Psi(\cdot)$ returns robot position, and
\eqn{
  L = \frac{k_S(r, p)}{k_S(p, p)} C, \quad V = \sigma_m^2 + k_S(r, r) - \frac{k_S^2(r, p)}{k_S(p, p)}.
}
The KF covariance $\Sigma$ evolves as
\eqn{
  \dot{\Sigma} = A\Sigma + \Sigma A^\top + BB^\top - \Sigma L^\top (V \Delta t)^{-1} L \Sigma.
}
Let $\Pi = C\Sigma C^\top$ be the variance of $f(t, p)$, and define clarity as $q = 1/(1 + \Pi)$. The clarity dynamics are given as $
\dot{q} = -q^2 C \dot{\Sigma} C^\top$. This expression can be simplified to: 
\eqn{
\label{clarity_dynamics}
  \dot{q} = S(x, p)(1 - q)^2 - D(p, q),
}
where $S(x, p)$ captures clarity gain at $p$ from a measurement at $x$, and $D(p, q)$ models clarity decay at $p$ \cite{agrawal2024multi}. For Matérn-1/2 temporal kernel:
\begin{align}
    S(x, p) &= \frac{1}{\Delta T} \cdot \frac{k_S(r, p)^2}{k_S(p, p)\left(k_S(r, r) + \sigma_m^2\right) - k_S(r, p)^2}, \\
    D(p, q) &= 2 \lambda_t \left( \left( \sigma_t^2 + 1 \right) q^2 - q \right),\label{eq:clar_dyn_decay}
\end{align}
where $k_S(r,p) = \sigma_s^2 \exp\left(-\|r-p\|/\lambda_s\right)$ is the Matérn-1/2 spatial kernel, $\lambda_t,\lambda_s$ are temporal and spatial lengthscales, and $\sigma_t,\sigma_s$ are variance parameters controlling function scale. 

\begin{figure*}[t]
  \centering
   \captionsetup{font=footnotesize}
  \includegraphics[width=2.05\columnwidth]{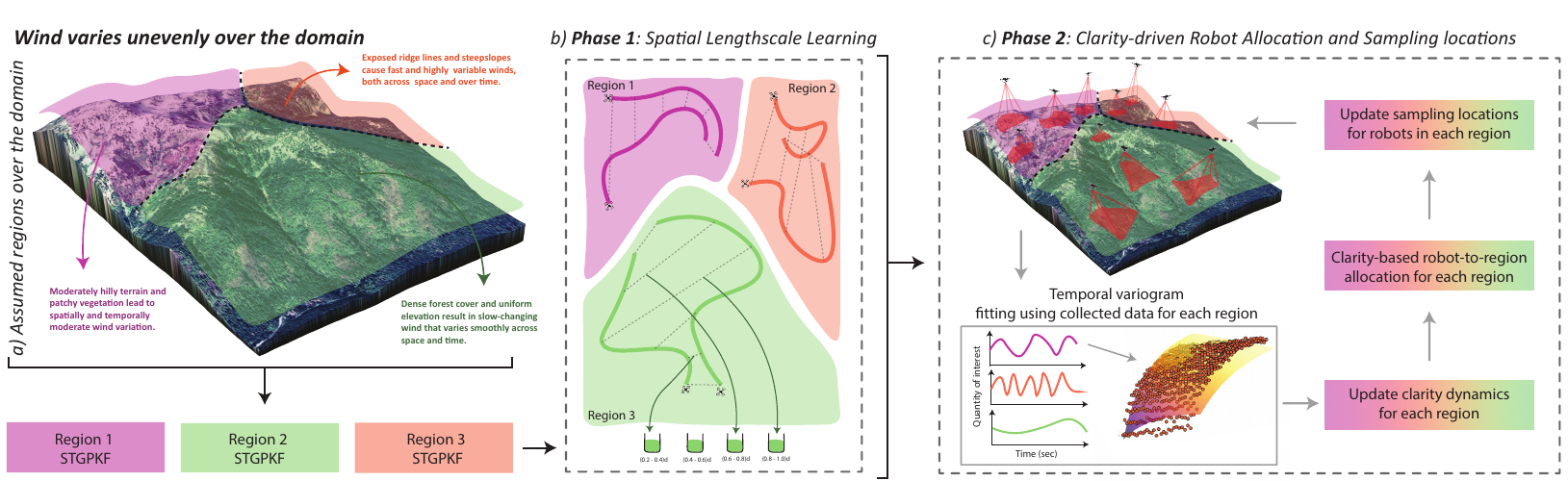}
  \caption{\textbf{Motivation}: A non-uniform spatiotemporal environment where wind varies slowly in forests (high lengthscales) and rapidly on ridgelines (low lengthscales). Robots aim to estimate this field. \textbf{The proposed framework}: (a) The environment is partitioned into regions, each modeled as an independent spatiotemporal Gaussian process Kalman filter (STGPKF) with constant spatial and temporal lengthscales. (b) In Phase 1, robots collect co-temporal spatial data to estimate region-specific spatial lengthscales via variogram fitting. (c) In Phase 2, robots are assigned to regions and fixed sampling locations based on initial clarity dynamics. Temporal lengthscales are updated at regular intervals, and allocation is updated based on the updated STGPKF model.}
  \vspace{-17pt}
  \label{fig:mot_prop}
\end{figure*}

\subsection{Variograms}
\label{sec:variogram_prelim}
Variograms provide a non-parametric way to estimate the hyperparameters of stationary Gaussian Processes (GPs), particularly the lengthscale. Given a set of scalar field measurements $\mathbb{D} = \{(a_i, b_i)\}_{i=1}^N$, where $a_i \in \mathbb{R}^d$ are input locations and $b_i \in \mathbb{R}$ are field values:

\begin{definition}[Theoretical Variogram {\cite{variograms}, Eq.~7.6}]
For a zero-mean second-order stationary field $Z: \mathbb{R}^d \to \mathbb{R}$, the theoretical variogram is defined as
\eqn{
  \gamma(d) = \frac{1}{2} \mathbb{E}[(Z(a_i) - Z(a_j))^2], \;\; \text{where } d = \|a_i - a_j\|.
}
\end{definition}

\begin{definition}[Empirical Variogram]
The empirical variogram at distance $d$ is estimated from data as
\eqn{
\label{eq:empirical_variogram}
  \hat\gamma(d) = \frac{1}{2|N(d)|} \sum_{(i,j) \in N(d)} (b_i - b_j)^2,
}
where $N(d)$ is the set of index pairs $(i, j)$ such that $i \neq j$ and $\|a_i - a_j\| \in (d - \epsilon, d + \epsilon)$ for a small bin width $\epsilon$.
\end{definition}

The kernel lengthscale is estimated by fitting a theoretical variogram model (e.g., exponential, Matérn) to the empirical variogram using nonlinear least squares.

\section{Problem Motivation \& Formulation}\label{sec:prob}
\subsection{Overall Problem Motivation and Formulation}

Consider a set of robot indices $\Rcal$, with $N_r = |\Rcal|$. Let $\Tcal$ denote the set of region indices, with $M = |\Tcal|$, and let $\Dcal \subset \mathbb{R}^D$ denote the domain. 
The domain is partitioned into disjoint regions $\{\Dcal_j\}_{j \in \Tcal}$, i.e., $\Dcal = \bigcup_{j \in \Tcal} \Dcal_j$ with $\Dcal_{j_1} \cap \Dcal_{j_2} = \emptyset$ for all $j_1, j_2 \in \Tcal$, $j_1 \neq j_2$. 
For compactness, we will refer to a region simply as $j \in \Tcal$ and a robot as $i \in \Rcal$.

\begin{assumption}
\label{assumption:2}
Within each region $j \in \Tcal$, the spatiotemporal field is second-order stationary, i.e., $E[f(p,t)] = \mu$ and $\mathrm{Cov}(f(p,t), f(p+\Delta p, t+\Delta t)) = \mathrm{Cov}(\Delta p, \Delta t)$. Each region is characterized by a constant spatial and temporal lengthscale, which may differ across regions.
\end{assumption}

For example, in~\cref{fig:mot_prop}, wind varies rapidly on ridgelines (small lengthscales) and slowly in dense forests (large lengthscales), with intermediate variation in hilly terrain. 

\renewcommand{\theoverprob}{Problem}
\begin{overprob}
\label{prob_1}
Let $f(t, p) : \mathbb{R}_{\geq 0} \times \mathcal{D} \to \mathbb{R}$ denote the field value at time $t$ and location $p \in \mathcal{D}$. Robots sense the field at discrete time steps $t_k$, $k \in \mathbb{Z}$. At each step $t_k$, every robot $i \in \mathcal{R}$ selects a spatiotemporal sampling location $X_k^i = (t_k^i, p_k^i)\in \mathbb{R}^{ (1+D)}$ and collects the corresponding field measurements $y_k^i \in \mathbb{R}$. 

Let $\mathbb{D}_k = \bigcup_{\ell = 1}^{k-1} \bigcup_{i \in \mathcal{R}} \{(X_\ell^i, y_\ell^i)\}$ 
denote the set of all data collected by all robots before iteration $k$. At iteration $k$, the $\mathbb{D}_k$ is used to update a single shared probabilistic model of the field. This model provides the posterior mean $\mu_k(t, p)$ and predictive covariance $\Sigma_k(t, p)$ for any test input $(t, p)$. We adopt \textit{clarity}, an uncertainty measure defined as
\eqn{
q_k(t, p) = \frac{1}{1 + |\Sigma_k(t, p)|}, \quad q_k(t, p) \in (0, 1).
}

At each planning step, robots choose sampling locations to maximize the average clarity as\footnote{
If the ground truth is known offline, the objective can be to directly minimize prediction error. In the more realistic case of unknown ground truth, we optimize a surrogate information-theoretic metric.}
\eqn{
\argmax_{\{X_k^i\}_{i \in \mathcal{R}}} \; \frac{1}{|\mathcal{D}|} \int_{\mathcal{D}} q_k(t, p) \, dp.
}
\end{overprob}

\subsection{Method Overview and Problem Reformulation}

In~\cref{prob_1}, clarity reflects prediction uncertainty but is only meaningful when the GP model is well-calibrated, which depends on accurate spatial and temporal lengthscales. Solving~\cref{prob_1} is challenging for two key reasons. First, sampling decisions and lengthscale estimates are coupled: sample locations affect lengthscale estimation, while the estimated lengthscales shape the GP posterior and, in turn, the clarity metric used for planning. Second, non-uniform environments exhibit region-dependent spatial and temporal correlations, requiring distinct lengthscales per region. To address these challenges, we propose a two-phase framework in which each region is modeled independently using a stationary GP with its own spatial and temporal lengthscales.

In \textbf{Phase 1}, robots use the proposed adaptive planner to collect samples for estimating region-specific spatial lengthscales. For each $j \in \mathcal{T}$, we estimate the spatial lengthscale $\lambda_{s,j}$ using a variogram-based method (\cref{sec:variogram_prelim}). In \textbf{Phase 2}, robots are assigned fixed sampling locations within each region to collect temporally dense data for iteratively updating the temporal lengthscale $\lambda_{t,j}$. At fixed intervals, the GP for region $j$ is reconstructed using its fixed $\lambda_{s,j}$ and updated $\lambda_{t,j}$. All planning and communication is centralized.

To quantify the uncertainty in variogram-based lengthscale estimation, we use results from the nonlinear least squares (NLS) literature \cite{estimation_book}. Subject to the assumptions stated in \cref{lemma:variance_exact}, we establish the variance of the variogram-based estimator for general kernel parameters $\rho = (\sigma, \lambda)$, where $\sigma$ is the variance and $\lambda$ is the lengthscale. Later, we distinguish spatial and temporal cases by writing $\rho_s = (\sigma_s, \lambda_s)$ and $\rho_t = (\sigma_t, \lambda_t)$.

\begin{lemma}
\label{lemma:variance_exact}
Let $\hat{\rho} = (\hat{\sigma}, \hat{\lambda})$ denote the estimated variogram parameters, obtained by fitting a theoretical variogram model $\gamma(h;\rho)$ to the empirical semivariances $\hat{\gamma}(h_b)$, where the latter are computed over $N_b$ spatial lag bins indexed by $b \in \Bcal$ with $N_b = |\Bcal|$. Each bin $b$ corresponds to a spatial lag interval $h_b = [(b-1)\epsilon,\; b\epsilon) \subset \mathbb{R}$, where $\epsilon > 0$ is the bin width.  Assume the empirical residuals are uncorrelated across bins and define  $\Sigma = \mathrm{diag}\!\left(\sigma_b^2 /{n_b}\right)$ and $J_b = -\left.\frac{\partial}{\partial \rho} \gamma(h_b; \rho)\right|_{\rho = \rho_0}$ , 
where $\sigma_b^2$ is the sample variance of squared differences in bin $b$, and $n_b$ is the number of measurement pairs in $h_b$. Let $\textbf{J} = [J_1, \dots, J_{N_b}]^\top$, i.e., a matrix whose $b$-th row is $J_b$. Then, under a first-order linearization of the least squares estimator, the variance is
\eqn{
\mathrm{Var}[\hat\rho]
= (\textbf{J}^\top \textbf{J})^{-1}
  \bigl(\textbf{J}^\top \,\Sigma\,\textbf{J}\bigr)
  (\textbf{J}^\top \textbf{J})^{-1}.
}
\end{lemma}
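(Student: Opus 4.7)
The plan is to apply the standard first-order delta-method argument for nonlinear least squares (NLS) estimators to the variogram-fitting problem. I would organize the proof in four steps: (i) set up the NLS problem and its first-order optimality condition, (ii) linearize the theoretical variogram $\gamma(h_b;\rho)$ about the true parameter $\rho_0$, (iii) solve for the deviation $\hat\rho-\rho_0$ in closed form to first order, and (iv) propagate the covariance of the empirical semivariances through this linear map.

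For step (i), I would stack the empirical semivariances as $\hat{\boldsymbol\gamma} = (\hat\gamma(h_1),\ldots,\hat\gamma(h_{N_b}))^\top$ and the model values as $\boldsymbol\gamma(\rho) = (\gamma(h_1;\rho),\ldots,\gamma(h_{N_b};\rho))^\top$, so that $\hat\rho \in \argmin_\rho \|\hat{\boldsymbol\gamma}-\boldsymbol\gamma(\rho)\|^2$. Writing $G(\rho) = \partial\boldsymbol\gamma/\partial\rho$, the first-order optimality condition is $G(\hat\rho)^\top(\hat{\boldsymbol\gamma} - \boldsymbol\gamma(\hat\rho)) = 0$. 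For step (ii), under the sign convention in the statement, $G(\rho_0) = -\mathbf{J}$, and a first-order Taylor expansion gives $\boldsymbol\gamma(\hat\rho) \approx \boldsymbol\gamma(\rho_0) - \mathbf{J}(\hat\rho-\rho_0)$. For step (iii), substituting this into the optimality condition and keeping only first-order terms yields $\mathbf{J}^\top[\hat{\boldsymbol\gamma}-\boldsymbol\gamma(\rho_0)] + (\mathbf{J}^\top\mathbf{J})(\hat\rho-\rho_0) = 0$, hence $\hat\rho - \rho_0 \approx -(\mathbf{J}^\top\mathbf{J})^{-1}\mathbf{J}^\top[\hat{\boldsymbol\gamma}-\boldsymbol\gamma(\rho_0)]$, with invertibility of $\mathbf{J}^\top\mathbf{J}$ following from identifiability of the kernel parameters from the chosen bin lags.

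For step (iv), since $\mathbb{E}[\hat\gamma(h_b)] = \gamma(h_b;\rho_0)$ the residual vector $\hat{\boldsymbol\gamma} - \boldsymbol\gamma(\rho_0)$ has mean zero, and the bin-wise sample-variance assumption together with the uncorrelatedness hypothesis gives $\mathrm{Cov}(\hat{\boldsymbol\gamma}) = \Sigma = \mathrm{diag}(\sigma_b^2/n_b)$. Applying $\mathrm{Cov}(AX) = A\,\mathrm{Cov}(X)\,A^\top$ with $A = -(\mathbf{J}^\top\mathbf{J})^{-1}\mathbf{J}^\top$, the two sign factors cancel and the sandwich form $\mathrm{Var}[\hat\rho] \approx (\mathbf{J}^\top\mathbf{J})^{-1}(\mathbf{J}^\top\Sigma\mathbf{J})(\mathbf{J}^\top\mathbf{J})^{-1}$ drops out immediately.

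The main obstacle is not the algebra, which is a one-line computation once the linearization is in place, but carefully delineating the hypotheses under which the linearization is justified. In particular, the argument implicitly needs $\hat\rho$ to be consistent (so that expanding about $\rho_0$ is sensible), sufficient smoothness of $\gamma(h_b;\cdot)$ so that higher-order Taylor remainders are $o(\|\hat\rho-\rho_0\|)$, and the stated bin-wise uncorrelatedness; the last is the most delicate in practice because a single measurement contributes to many bin pairs, so cross-bin correlations are only approximately zero. I would flag this as a first-order approximation consistent with the standard NLS covariance treatment in \cite{estimation_book}, and make explicit that the formula is accurate up to terms of order $\|\hat\rho-\rho_0\|^2$.
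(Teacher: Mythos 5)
Your proposal is correct and follows essentially the same route as the paper's own derivation (given in the appendix): stack the empirical semivariances, write the first-order optimality condition, Taylor-linearize the residual about $\rho_0$ to get $\hat\rho-\rho_0$ as a linear map of the residual vector, and propagate $\Sigma=\mathrm{diag}(\sigma_b^2/n_b)$ through that map to obtain the sandwich form. Your closing remarks on when the linearization is justified (consistency of $\hat\rho$, smoothness, and the only-approximate cross-bin uncorrelatedness) go slightly beyond what the paper states explicitly, but they do not change the argument.
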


\begin{proof}
The proof follows from the NLS fitting \cite{estimation_book}. 
\end{proof}
We now decompose~\cref{prob_1} into two sub-problems. The first focuses on planning Phase 1 samples to minimize the spatial lengthscale estimation variance:
\renewcommand{\theprob}{1}
\begin{prob}
\label{prob_1a}
\textbf{(Spatial Lengthscale Learning)}.
At iteration $k$, each robot $i \in \Rcal$ selects one spatiotemporal location $X_k^i = (t_k^i, p_k^i) \in \mathbb{R}^{1+D}$. 
The goal is to learn the spatial lengthscale $\lambda_{s,j}$ in each region $j \in \Tcal$ by solving
\eqn{
\min_{\{X_k^i\}_{i \in \Rcal}} \; \mathrm{Var}[\lambda_{s,j}].
}
\end{prob}
Once $\lambda_{s,j}$ is learned for each region $j$, we construct a region-specific STGPKF model and define the clarity dynamics using~\eqref{clarity_dynamics}. Since $\lambda_{t,j}$ is still unknown, we initialize it with a small default value to enable initial planning.

Using these initial clarity dynamics, robots are allocated to regions and assigned  sampling locations. Robots collect measurements at a fixed rate, and in each region, data from one robot is used to estimate $\lambda_{t,j}$ via variogram. The clarity dynamics are updated, and allocation is recomputed. To determine the allocation and sampling locations, we solve:

\renewcommand{\theprob}{2}
\begin{prob}
\label{prob_1b}
\textbf{(Robot Allocation)}.
The objective is to assign each robot $i \in \Rcal$ to a region $j \in \Tcal$ and compute its sampling location. 
At iteration $k$, the optimization is
\begin{subequations}
\label{eq:prob_1b}
\begin{align}
    \max_{x_{ij}, X_k^i} \quad & \sum_{i} \sum_{j} x_{ij} \int_{\Dcal_j} \bigl[ q(t_k,p) - q(t_{k+1},p;\mathbf{X}_k^i) \bigr] dp 
    \label{eq:prob_1b_obj} \\
    \textrm{s.t.} \quad & \sum_{i} x_{ij} \geq 1 \quad \forall j \label{eq:prob_1b_atleast_1r} \\ 
    & \sum_{j} x_{ij} = 1 \quad \forall i \label{eq:prob_1b_exactly_1t} 
\end{align}
\end{subequations}
where $x_{ij}$ is a binary variable indicating whether robot $i$ is assigned to region $j$, and $X_k^i$ is the sampling configuration of robot $i$ at iteration $k$. 
The integral captures clarity gain over region $\Dcal_j$. 
Constraints ensure each region has at least one robot and each robot is assigned to exactly one region.
\end{prob}

\section{Phase I: Adaptive Spatial Lengthscale Estimation via Variogram Sampling}\label{sec:method_prob1}
This section presents the proposed solution to the ~\cref{prob_1a}. To estimate the spatial lengthscale $\lambda_{s,j}$ in each region $j \in \mathcal{T}$, we design a data collection strategy that emphasizes spatial diversity while minimizing estimation variance. Each region is assigned a pair of robots that perform \textit{concurrent sampling}—i.e., they collect measurements simultaneously at spatially separated locations. This is crucial to ensure that observed differences reflect spatial, not temporal, variation. For readability, we describe the algorithm for a single region $j$, dropping the index $j$ from the equations. Previously, we defined the dataset $\mathbb{D}_k = \bigcup_{\ell = 1}^{k-1} \bigcup_{i \in \mathcal{R}} \{(X_\ell^i, y_\ell^i)\}$
as the collection of all spatiotemporal samples gathered by the robot team up to (but not including) iteration $k$.  For this phase, we only have two robots in each region. The \textit{spatial lag} of such a pair is defined as the Euclidean distance between the two measurement locations.

To systematically organize these lags, we divide the possible range of spatial separations into discrete intervals called \textit{bins}. Each bin $b \in \mathcal{B}$ corresponds to a spatial lag interval $h_b = [(b - 1)\epsilon,\; b\epsilon) \subset \mathbb{R}$, 
where $\epsilon > 0$ is the bin width. We define the dataset $\mathbb{D}_k^b$, the set of all concurrent measurement pairs—accumulated up to iteration $k-1$, whose spatial separation falls within the $b^{\text{th}}$ bin as
\eqn{
\mathbb{D}_k^b = \bigcup_{\ell = 1}^{k-1} \left\{ \left(y_\ell^{1}, y_\ell^{2}\right) \;\middle|\; \left\|p_\ell^{1} - p_\ell^{2}\right\| \in h_b,\; t_\ell^{1} = t_\ell^{2} \right\}.
}
Thus, the dataset used for variogram estimation up to iteration $k$ can be expressed as the union over all bins as
$\mathbb{D}_k = \bigcup_{b \in \mathcal{B}} \mathbb{D}_k^b$. Now, as shown in \cref{sec:variogram_prelim}, the empirical and theoretical variograms are computed. 
 The empirical variogram $\hat{\gamma}_j(h_b)$ is computed using all such bins via~\eqref{eq:empirical_variogram}. The parameters $\hat{\rho}_{s,j} = (\hat{\sigma}_{s,j}, \hat{\lambda}_{s,j})$ of a theoretical variogram model $\gamma(h; \rho_j)$ (e.g., Matérn-1/2) are fit using NLS.

To guide adaptive sampling, we design a bin-based planner that allocates samples to spatial lag bins to minimize the variance of the estimated spatial lengthscale $\hat{\lambda}_{s,j}$. Crucially, this cannot be achieved without sufficient coverage across all lag bins, as reliable variogram fitting requires sample pairs spanning a range of spatial separations. The planner, therefore, prioritizes bins that either contribute significantly to the variance or are undersampled. Based on the total variance expression in~\cref{lemma:variance_exact}, we compute the per-bin contribution to the variance as
\begin{equation}
\label{eq:bin_contrib}
C_b = J_{b}^2 \frac{\sigma_{b}^2}{|\mathbb{D}^{b}_{k}|}.
\end{equation}
where $J_b$ is the sensitivity term for bin $b$, and $\sigma_b^2$ is the empirical variance of squared differences in that bin.

To determine which bin to sample next, we compute a \textit{score} $S_b$ for each bin that balances two objectives: reducing variance and ensuring coverage. The score is defined as
\begin{equation}
\label{eq:bin_score}
S_b = \frac{|\mathbb{D}_k^b|}{B_d} \psi(C_b) + \left(1 - \frac{|\mathbb{D}_k^b|}{B_d}\right) \psi(G_b).
\end{equation}
Here, $B_d$ is the total sample budget, $\psi(\cdot)$ denotes the L1 normalization (see~\cref{sec:prelim}), and $G_b = \frac{1}{|\mathbb{D}_k^b| + \delta}$ promotes coverage, with $\delta > 0$ ensuring numerical stability. The first term emphasizes bins with high variance contribution; the second favors under-sampled bins.
We convert the scores into a sampling distribution using a softmax function
\begin{equation}
w_b = \frac{\exp(S_b / T)}{\sum_{b'} \exp(S_{b'} / T)}.
\end{equation}
where $T > 0$ is a temperature parameter controlling distribution sharpness. A bin $b^*$ is sampled according to $w_b$, and a concurrent pair $(p_1, p_2)$ is selected with spatial separation close to $h_{b^*}$. To preserve independence assumptions for valid variogram fitting, both points must be sufficiently far from prior samples in $\mathbb{D}_k^{b^*}$. This adaptive process continues until the sample budget $B_d$ is exhausted.

\section{Phase II: Clarity-Aware Robot Allocation and Sampling}\label{sec:method_prob2}
Solving \cref{prob_1b}, however, is more challenging. The problem is NP-hard and involves a tightly coupled task (region) allocation and sampling location selection, leading to a computationally expensive MINLP. The nonlinearity arises from the integral term in the objective, which quantifies the clarity gain based on robot sampling locations. 

To improve tractability, we approximate the solution by reformulating \cref{prob_1b} as an MILP. The problem is tackled in three steps. First, we precompute candidate sampling locations using Voronoi partitioning, where each region is divided into Voronoi cells, whose centroids serve as sampling locations. For each possible number of robot assignments to a region (from 1 to $N_r - M$), we compute a Voronoi partition and use its centroids as sampling locations. For each region $j \in \Tcal$, we select one partition from a set of all possible Voronoi partitions, defined as
\begin{equation}
\Vcal_j = \{ V_{j,n_r} \mid n_r \in \Rcal_{\max} \},
\end{equation}
where $V_{j,n_r}$ is a Voronoi partition in region $j$ with $n_r$ robots, and $\Rcal_{\max} = \{1, \dots, N_r - M\}$ ensures at most $N_r - M$ robots per region. 
Each candidate Voronoi partition \( V_{j,n_r} \) has an associated set of centroids:
\begin{equation}
\Ccal_{j,n_r} = \{ c_{j,n_r,n_c} \mid n_c = 1, \dots, n_r \},
\end{equation}
where \( c_{j,n_r,n_c} \) is the $n_c$-th centroid in region $j$ under assignment $n_r$. Each centroid corresponds to a sampling location when that assignment is made.

Next, we compute the time required to increase clarity in each region $j \in \Tcal$, for all possible robot allocations. Each region contains a subset of grid points $\Pcal_{G,j} \subset \Pcal_G$, and we estimate the time required to increase clarity at each grid point $p \in \Pcal_{G,j}$ using the current clarity state $q_j(t, p)$ and a desired target $\bar{q}(p)$. This evolution is driven by a sampling plan defined by the centroids $\Ccal_{j, n_r}$. The time-to-clarity function is defined as
\begin{equation}
T_{j,n_r}(t, \Pcal_{G,j}) = \max_{p \in \Pcal_{G,j}} T\big(q_j(t, p), \bar{q}(p) \mid \Ccal_{j,n_r}, \dot{q}_j\big).
\label{eq:time_to_clarity}
\end{equation}
Here, $T(\cdot)$ denotes the estimated time required for the clarity at location $p \in \Pcal_{G,j}$ to reach the desired threshold under the influence of sampling from centroids in $\Ccal_{j,n_r}$. The maximum ensures that the slowest-improving point dominates the regional completion time.

Then, using the values of $T_{j,n_r}$, we define the following MILP to minimize the total time and assignment cost:
\begin{subequations}
\label{eq:task_allocation_prob}
\begin{align}
\min_{x_{ij}, n_{jr}} \quad & \sum_{j \in \Tcal} T_{j,n_r} n_{jr} + \sum_{j \in \Tcal} \sum_{i \in \Rcal} M(i,j) x_{ij} \label{eq:milp_obj_a} \\
\textrm{s.t.} \quad & \sum_{i \in \Rcal} x_{ij} \geq 1 \quad \forall j \in \Tcal \label{eq:milp_con_a}, \\
& \sum_{j \in \Tcal} x_{ij} = 1 \quad \forall i \in \Rcal \label{eq:milp_con_b},
\end{align}
\end{subequations}
where, $x_{ij} \in \{0,1\}$ is a binary decision variable indicating whether robot $i$ is assigned to region $j$, and $n_{jr}$ is the number of robots assigned to region $j$. The term $M(i,j)$ denotes the cost of assigning robot $i$ to region $j$, while $T_{j,n_r}$ represents the time required to achieve the desired clarity when $n_r$ robots are assigned. Constraints \eqref{eq:milp_con_a} and \eqref{eq:milp_con_b} ensure that each region is assigned at least one robot, and each robot is assigned exactly once.
The problem in \eqref{eq:task_allocation_prob} is a well-defined MILP and admits an optimal solution, provided that $T_{j,n_r}$ is non-increasing with respect to the number of robots $n_r$.

 When robots are assigned to a region, a linear assignment problem is solved to assign the $n_{jr}$  robots to the $n_{jr}$ possible sampling locations within that region:
\begin{subequations}
\label{eq:lin_assignment_prob}
\eqn{
    \textbf{X}_k^{i} = \min_{x_{ic}} \quad &   \sum_{c \in \Ccal_{j,n_r}}\sum_{i \in \Rcal} d_c(i,c \mid x_{ij}, n_{jr})x_{ic} \label{eq:lsp_obj_a} \\
    \textrm{s.t.} \quad  & \sum_{i \in \Rcal} x_{ic} = 1 \quad \forall c \in \Ccal_{j,n_r}, \label{eq:lsp_con_a}\\
    & \sum_{c \in \Ccal_{j,n_r}}  x_{ic} = 1 \quad \forall i \in \mathcal{R}, \label{eq:lsp_con_b} 
}
\end{subequations}
where $x_{ic}$ is a binary variable representing wether robot $i$ is assigned to centroid $c \in \Ccal_{j,n_r}$ of the Voronoi partition $V_{j,n_r}$. $d_c(i,c \mid x_{ij}, n_{jr})$ is the cost of moving $i^{th}$ robot current position to the centroid $c$ of the allocated region $j$. The result is the sampling locations for all robots at iteration $k$. Once the temporal lengthscale is re-estimated, the region allocation and sampling assignments are updated at the next iteration.

\section{Theoretical Guarantees}\label{sec:theo_guarantees}
This section presents theoretical guarantees: almost sure convergence of the spatial lengthscale estimate and a sublinear dynamic regret bound for clarity-based allocation.

\subsection{Almost‐Sure Convergence of the Spatial lengthscale $\lambda_s$}
\label{sec:thm_1}

In this section, we show that the estimated spatial kernel parameters $\hat \rho_s = (\hat \sigma_s, \hat \lambda)$ converge almost surely to the true parameters $\rho^*_s = (\sigma^*_s, \lambda^*_s)$.

Let \(n\) be the total number of concurrent sample pairs collected by the method described in \cref{sec:method_prob1}, and let
\eqn{
L_n(\rho_s)
=\sum_{b=1}^{N_b}\bigl[\hat\gamma_n(h_b)-\gamma(h_b;\rho_s)\bigr]^2,
}
where $\rho_s = (\sigma_s, \lambda_s)$ is the spatial kernel parameters, \(\hat\gamma_n(h_b)\) is the empirical semivariance computed from the \(n\) samples in bin \(b\).  Define the estimator
\eqn{
\hat\rho_{s,n}
=\arg\min_{\rho_s\in\Lambda}L_n(\rho_{s}).
}
We show that, under mild assumptions, the estimator sequence ${\hat\rho_{s,n}}$ converges almost surely to the true variogram parameters $\rho^*_s$ as the sample budget $B_d \to \infty$.

\begin{assumption}\label{asm:semivar}
For each lag‐bin \(b\), the empirical semivariance \(\hat\gamma(h_b)\) defined in \eqref{eq:empirical_variogram} satisfies
$\hat\gamma(h_b)
=\gamma\bigl(h_b;\rho^*_s\bigr)\;+\;\varepsilon_b,$
with \(\E[\varepsilon_b]=0\) and \(\mathrm{Var}(\varepsilon_b)<\infty\).  Moreover, under~\cite{cressie2015statistics}, 
\(\E[\hat\gamma(h_b)]-\gamma(h_b)=\Ocal(1/n_b)\) and \(\mathrm{Var}(\varepsilon_b)=\Ocal(1/n_b)\), where \(n_b=|N(h_b)|\).
\end{assumption}

\begin{assumption}\label{asm:bins}
As the total number of concurrent samples \(n\to\infty\), each bin’s sample count \(n_b\to\infty\). Equivalently, there exists \(c_b>0\) such that \(n_b/n\to c_b\)
\end{assumption}

\begin{assumption}\label{asm:compact_param}
The parameter space \( \Lambda \subset \mathbb{R}^d \) is compact; that is, it is closed and bounded.
\end{assumption}

\begin{assumption}\label{asm:ident}
The map \(\rho_s\mapsto\bigl(\gamma(h_1;\rho_s),\dots,\gamma(h_{N_b};\rho_s)\bigr)\) is injective on the compact parameter set \(\Lambda\).  In particular, \(\rho_{s_1}\neq\rho_{s_2}\) implies \(\gamma(\,\cdot\,;\rho_{s_1})\neq\gamma(\,\cdot\,;\rho_{s_2})\).
\end{assumption}

\begin{thm}\label{prop:consistency}
Under Assumptions \ref{assumption:2}, \ref{asm:semivar}, \ref{asm:bins}, \ref{asm:compact_param}, and \ref{asm:ident}, along with \cref{lemma:variance_exact}, the estimator \(\hat\rho_s\) satisfies
\eqn{
\hat\rho_s \;\xrightarrow{\mathrm{a.s.}}\;\rho^*_s
\quad\text{and hence}\quad
\hat\lambda_s \;\xrightarrow{\mathrm{a.s.}}\;\lambda_s^*
\quad\text{as }n\to\infty.
}
\end{thm}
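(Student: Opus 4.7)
The statement is a standard consistency result for a nonlinear-least-squares M-estimator, so my plan is to invoke the classical argmin-continuity theorem: (i) establish pointwise almost-sure convergence of the empirical semivariances, (ii) upgrade this to uniform almost-sure convergence of the objective $L_n$ over the compact parameter set, and (iii) combine with identifiability to conclude that the minimizer converges.

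\textbf{Step 1 (pointwise a.s.\ convergence).} Fix bin $b$. Assumption~\ref{asm:semivar} gives $\hat\gamma_n(h_b) = \gamma(h_b;\rho^*_s) + \varepsilon_b$ with bias $\Ocal(1/n_b)$ and variance $\Ocal(1/n_b)$, and Assumption~\ref{asm:bins} gives $n_b \to \infty$. The adaptive planner in \cref{sec:method_prob1} forces each new concurrent pair to be well separated from previously sampled pairs in the same bin, so the per-pair squared differences contributing to $\hat\gamma_n(h_b)$ can be grouped into a sequence of essentially independent (or weakly mixing) increments with bounded second moments. A Chebyshev plus Borel--Cantelli argument on the variance bound, together with the polynomial growth of $n_b$ that the score~\eqref{eq:bin_score} maintains by replenishing undersampled bins, yields $\hat\gamma_n(h_b) \xrightarrow{\mathrm{a.s.}} \gamma(h_b;\rho^*_s)$. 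Substituting into $L_n$ then gives, for each fixed $\rho_s$, $L_n(\rho_s) \xrightarrow{\mathrm{a.s.}} L(\rho_s) := \sum_{b=1}^{N_b}[\gamma(h_b;\rho^*_s) - \gamma(h_b;\rho_s)]^2$.

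\textbf{Step 2 (uniform convergence and identification).} The key observation is that the noise term $\hat\gamma_n(h_b) - \gamma(h_b;\rho^*_s)$ does not depend on $\rho_s$. Using $|a^2-b^2| = |a-b|\,|a+b|$, I expand $|L_n(\rho_s) - L(\rho_s)|$ and bound $|a+b|$ uniformly in $\rho_s$ by continuity of $\rho_s \mapsto \gamma(h_b;\rho_s)$ on the compact set $\Lambda$ (Assumption~\ref{asm:compact_param}). Since $N_b$ is finite and Step~1 handles each bin a.s., $\sup_{\rho_s \in \Lambda}|L_n(\rho_s) - L(\rho_s)| \to 0$ almost surely. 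By Assumption~\ref{asm:ident}, $L(\rho_s) = 0$ iff $\gamma(h_b;\rho_s) = \gamma(h_b;\rho^*_s)$ for every $b$, iff $\rho_s = \rho^*_s$; since $L \geq 0$, $\rho^*_s$ is the unique global minimizer on $\Lambda$.

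\textbf{Step 3 (conclusion) and main obstacle.} Uniform almost-sure convergence of a continuous objective on a compact set to a limit with a unique minimizer is exactly the hypothesis of the standard argmin consistency theorem, which delivers $\hat\rho_{s,n} \xrightarrow{\mathrm{a.s.}} \rho^*_s$; continuity of the coordinate projection then gives $\hat\lambda_s \xrightarrow{\mathrm{a.s.}} \lambda^*_s$. The substantive difficulty throughout is Step~1: because the underlying field is spatially correlated, the contributing pairs in a bin are not genuinely i.i.d., so the strong law cannot be cited directly. The cleanest resolution is to exploit the planner's separation condition to obtain a mixing-type decay between the successive squared differences, or alternatively to invoke a subsequence/Borel--Cantelli argument once the $\Ocal(1/n_b)$ variance bound is summed against the planner-enforced growth rate of $n_b$; the remaining steps then reduce to routine applications of compactness, continuity, and injectivity.
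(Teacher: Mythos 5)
Your proposal is correct and follows essentially the same route as the paper: per-bin almost-sure convergence of the empirical semivariances, uniform convergence of $L_n$ over the compact $\Lambda$ via finiteness of $N_b$ and continuity, identifiability to pin down the unique minimizer, and the standard argmin-consistency theorem. The only difference is in the per-bin step, where the paper simply invokes the SLLN on an asserted i.i.d.\ sequence of squared differences, while you replace this with a Chebyshev/Borel--Cantelli subsequence argument that explicitly confronts the spatial dependence of pairs within a bin --- a more careful treatment of the same step rather than a different proof.
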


\begin{proof}
The proof is constructed as follows. As more concurrent sample pairs are collected, empirical semivariances become more accurate. Under standard assumptions—unbiased noise, sufficient bin coverage, and a compact parameter space—the empirical loss converges uniformly to its expected value, uniquely minimized at the true parameter. This ensures the estimator converges almost surely to the true spatial lengthscale. Formal proof is given below:

Define the loss as
\begin{subequations}
\eqn{
L(\rho_s)\;&=\;\sum_{b=1}^{N_b}\E\bigl[\hat\gamma(h_b)-\gamma(h_b;\rho_s)\bigr]^2 \\
        &=\sum_{b=1}^{N_b}\bigl[\gamma(h_b;\rho_s^*)-\gamma(h_b;\rho_s)\bigr]^2  
        \;+\;\sum_{b=1}^{N_b}\mathrm{Var}(\varepsilon_b).
}
\end{subequations}

Since $\mathrm{Var}(\varepsilon_b)$ is independent of $\rho_s$, \cref{asm:semivar} implies that $L(\rho_s)$ is uniquely minimized at the true parameter $\rho_s^*$ by identifiability, and furthermore specifies that for each lag bin $b$,
\eqn{
  \hat\gamma(h_b)
  = \gamma(h_b;\rho_s^*) + \varepsilon_b,
  \quad \E[\varepsilon_b] = 0,
  \quad \mathrm{Var}(\varepsilon_b) < \infty.
}
For any $\rho_s\in\Lambda$, define the \emph{residual}
\eqn{
  r_b(\rho_s)
  \;=\;\hat\gamma(h_b) \;-\;\gamma(h_b;\rho_s)
  = \bigl[\gamma(h_b;\rho_s^*)-\gamma(h_b;\rho_s)\bigr] \;+\;\varepsilon_b.
}
Because
1) $\Lambda$ is compact per \cref{asm:compact_param} and $\gamma(h;\rho_s)$ is continuous in $\rho_s$, the deterministic term 
   $\gamma(h_b;\rho_s^*)-\gamma(h_b;\rho_s)$ 
   is uniformly bounded over $b$ and $\rho_s$,
2) and $\mathrm{Var}(\varepsilon_b)<\infty$ for every $b$, it follows that there exists a finite $M$, $\forall\,b,\;\forall\,\rho_s\in\Lambda$, such that
\eqn{
  \E\bigl[r_b(\rho_s)^2\bigr]
  \;=\;\bigl[\gamma(h_b;\rho_s^*)-\gamma(h_b;\rho_s)\bigr]^2
       + \mathrm{Var}(\varepsilon_b)
  \;\le\; M,
}
In other words, the residuals $r_b(\rho_s)$ have \emph{uniformly bounded second moments}.  Hence for each bin $b$, the Strong Law of Large Numbers applies to the i.i.d.\ sequence $\{r_b^{(k)}(\rho_s)\}_{k=1}^{n_b}$, giving
\eqn{
  \frac{1}{n_b}
  \sum_{(i,j)\in N(h_b)}
  \bigl[\hat\gamma(h_b)-\gamma(h_b;\rho_s)\bigr]^2
  \;\xrightarrow{\mathrm{a.s.}}\;
  \E\bigl[\hat\gamma(h_b)-\gamma(h_b;\rho_s)\bigr]^2.
}

For each bin \(b\).  \cref{asm:bins} ensures \(n_b/n\to c_b>0\), so that

\begin{equation}
\begin{aligned}
L_n(\rho_s)
&=\sum_{b=1}^{N_b}n_b
   \Bigl(\frac1{n_b}\sum_{(i,j)\in N(h_b)}
     \bigl[\hat\gamma(h_b)-\gamma(h_b;\rho_s)\bigr]^2\Bigr)\\
&\xrightarrow{\mathrm{a.s.}}
   \sum_{b=1}^{N_b}\bigl(n\,\tfrac{n_b}{n}\bigr)\,
     \E\bigl[\hat\gamma(h_b)-\gamma(h_b;\rho_s)\bigr]^2
   \quad(\tfrac{n_b}{n}\to c_b)\\
&=n\sum_{b=1}^{N_b}c_b\,
     \E\bigl[\hat\gamma(h_b)-\gamma(h_b;\rho_s)\bigr]^2
   =n\,L(\rho_s)\,
\end{aligned}
\label{eq:Ln-consistency}
\end{equation}

where $\sum_{b} c_b = 1$. Hence
\(\sup_{\rho_s\in\Lambda}\bigl|L_n(\rho_s)/n - L(\rho_s)\bigr|\to0\) almost surely, by finiteness of \(N_b\) and continuity of \(\gamma(h;\rho_s)\).

Finally, since \(\Lambda\) is compact and \(L(\rho_s)\) has a unique minimizer \(\rho_s^*\), the minimzers also converge almost surely \cite{van2000asymptotic}
\[
\hat\rho_s=\arg\min_\Lambda L_n(\rho_s)
\;\xrightarrow{\mathrm{a.s.}}\;
\arg\min_\Lambda L(\rho_s)=\rho_s^*.
\]
Projecting onto the second component yields \(\hat\lambda_s\to\lambda_s^*\).
\end{proof}

\subsection{Dynamic Regret Analysis of Phase II}

We analyze the performance of our clarity-aware robot allocation strategy under temporal lengthscale uncertainty. For readability, we use the notation $(k)$ to denote quantities at iteration $k$. At each iteration, the algorithm estimates the temporal lengthscale $\hat{\lambda}_{t,j}(k)$ in each region $j$ by refitting the variogram with newly collected data. This estimate determines the decay term in the clarity dynamics \eqref{eq:clar_dyn_decay}, which is then used to compute the predicted time-to-increase clarity $T_{j,n_j}(k)$ from \eqref{eq:time_to_clarity} for different allocations. The final assignment is selected by solving the MILP in \cref{sec:method_prob2}. 

Since $\hat{\lambda}_{t,j}(k)$ is learned from finite data, it typically underestimates the true value $\lambda_{t,j}^*$, leading to overestimated decay and inflated $T_{j,n_{jr}}(k)$. As a result, the algorithm may avoid assigning robots to regions where clarity would actually increase rapidly. The regret arises from this model mismatch—specifically, from not knowing the true temporal lengthscale parameter, which causes suboptimal decisions in the robot allocation step.

\subsubsection{Oracle vs. Proposed Algorithm}

Let $q_j(k)$ denote the clarity in region $j$ at iteration $k$. Let $n_{jr}(k)$ be the number of robots assigned to region $j$ at epoch $k$ by the algorithm, and $n_{jr}^*(k)$ the assignment chosen by an oracle with access to the true clarity dynamics. Define the clarity increase from epoch $k$ to $k+1$ as $\Delta q_j(k) = q_j(k+1) - q_j(k)$. Then, the instantaneous regret is defined as
\begin{equation}
r(k) = \sum_{j=1}^M \Delta q_j^{*}(k) - \sum_{j=1}^M \Delta q_j(k),
\end{equation}
where $\Delta q_j^{*}(k)$ denotes the clarity gain under the oracle assignment. The cumulative regret over $K$ iterations is:
\begin{equation}
R(K) = \sum_{k=1}^K r(k).
\end{equation}
Since the oracle assignment changes at each iteration $k$ due to the spatiotemporal nature of the environment, the regret is dynamic. To derive the regret, we use a lengthscale convergence result and some assumptions given below:

\begin{lemma}\label{lem:nls-rate}
For every region \(j \in \Tcal\), let \(N_j(k)\) denote the cumulative
number of \emph{co-temporal} measurements collected up to iteration \(k\).
The NLS estimator \(\hat\lambda_{t,j}(k)\) of the true
temporal length-scale \(\lambda^{\!*}_{t,j}\) satisfies
\eqn{
\mathbb{E}\bigl[\bigl|\hat\lambda_{t,j}(k)-\lambda^{*}_{t,j}\bigr|^2\bigr]
\;=\;\mathcal{O}\bigl(N_j(k)^{-1}\bigr), \qquad k \to \infty,
}
that is, its mean-square error decays at rate
\(N_j(k)^{-1}\).  

\end{lemma}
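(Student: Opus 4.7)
The plan is to bootstrap this rate directly from Lemma \ref{lemma:variance_exact} together with Assumption \ref{asm:semivar} and Assumption \ref{asm:bins}, separating the mean‐square error into variance and squared‐bias contributions and bounding each by $\mathcal{O}(N_j(k)^{-1})$. Because Theorem \ref{prop:consistency} already gives almost-sure consistency of $\hat\rho_{t,j}(k)\to \rho^{*}_{t,j}$, I can linearize the first-order optimality condition of $L_{N_j(k)}$ around $\rho^{*}_{t,j}$ and work in a neighborhood where the Jacobian $\mathbf{J}$ is nonsingular with probability one for $k$ large enough.

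First I would expand the gradient condition $\nabla L_{N_j(k)}(\hat\rho_{t,j}(k))=0$ to first order in $\hat\rho_{t,j}(k)-\rho^{*}_{t,j}$, obtaining the classical NLS linearization
\begin{equation*}
\hat\rho_{t,j}(k)-\rho^{*}_{t,j}
=(\mathbf{J}^\top \mathbf{J})^{-1}\mathbf{J}^\top \bm{\varepsilon}(k)+R_{N_j(k)},
\end{equation*}
where $\bm{\varepsilon}(k)$ stacks the bin residuals $\hat\gamma(h_b)-\gamma(h_b;\rho^{*}_{t,j})$ and $R_{N_j(k)}$ is a higher-order remainder. Taking expectations of the outer product and invoking Lemma \ref{lemma:variance_exact} yields
\begin{equation*}
\mathrm{Var}[\hat\rho_{t,j}(k)]
=(\mathbf{J}^\top \mathbf{J})^{-1}(\mathbf{J}^\top \Sigma \mathbf{J})(\mathbf{J}^\top \mathbf{J})^{-1}
+o\!\bigl(N_j(k)^{-1}\bigr).
\end{equation*}
By Assumption \ref{asm:bins} each bin count satisfies $n_b = c_b N_j(k)+o(N_j(k))$, so the diagonal entries of $\Sigma=\mathrm{diag}(\sigma_b^2/n_b)$ are $\mathcal{O}(N_j(k)^{-1})$ uniformly in $b$. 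Since $\mathbf{J}$ depends only on the fixed bin centers $h_b$ and on $\rho^{*}_{t,j}$, the matrices $(\mathbf{J}^\top \mathbf{J})^{-1}$ and $\mathbf{J}^\top \Sigma \mathbf{J}$ inherit the $N_j(k)^{-1}$ scaling through $\Sigma$, giving $\mathrm{Var}[\hat\lambda_{t,j}(k)]=\mathcal{O}(N_j(k)^{-1})$ after projecting onto the lengthscale component.

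Next I would handle the squared-bias term. The second part of Assumption \ref{asm:semivar} states that $\mathbb{E}[\hat\gamma(h_b)]-\gamma(h_b;\rho^{*}_{t,j})=\mathcal{O}(1/n_b)=\mathcal{O}(N_j(k)^{-1})$, so the induced bias in $\hat\lambda_{t,j}(k)$, propagated through the same linearization, is also $\mathcal{O}(N_j(k)^{-1})$, whose square is $\mathcal{O}(N_j(k)^{-2})$ and therefore negligible compared to the variance. Adding the two contributions gives
\begin{equation*}
\mathbb{E}\bigl[\,\bigl|\hat\lambda_{t,j}(k)-\lambda^{*}_{t,j}\bigr|^2\,\bigr]
=\mathrm{Var}[\hat\lambda_{t,j}(k)]+\bigl(\mathbb{E}[\hat\lambda_{t,j}(k)]-\lambda^{*}_{t,j}\bigr)^2
=\mathcal{O}\bigl(N_j(k)^{-1}\bigr).
\end{equation*}

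The main obstacle will be controlling the linearization remainder $R_{N_j(k)}$ rigorously in the mean-square sense, rather than just almost surely. To do this I would use compactness of $\Lambda$ (Assumption \ref{asm:compact_param}) together with smoothness of $\gamma(h;\rho_t)$ to obtain a uniform Lipschitz bound on the Hessian of $L_{N_j(k)}$, then combine it with the consistency from Theorem \ref{prop:consistency} to show $\mathbb{E}[\|R_{N_j(k)}\|^2]=o(N_j(k)^{-1})$. A secondary subtlety is that co-temporal samples within a bin are not perfectly independent across iterations; I would argue this only affects constants (not the rate) by invoking Assumption \ref{asm:semivar}, which already assumes finite residual variance, and by noting that the sandwich form in Lemma \ref{lemma:variance_exact} was derived precisely to accommodate heteroscedastic uncorrelated bin residuals.
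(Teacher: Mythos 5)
Your proposal is correct and follows essentially the same route as the paper's appendix proof: both rest on the sandwich variance formula of \cref{lemma:variance_exact}, the observation that each $n_b$ grows proportionally to $N_j(k)$ so that $\Sigma=\mathrm{diag}(\sigma_b^2/n_b)=\mathcal{O}(N_j(k)^{-1})$, and boundedness/full rank of $\mathbf{J}$ to conclude $\mathrm{Var}[\hat\lambda_{t,j}(k)]=\mathcal{O}(N_j(k)^{-1})$. You are in fact more complete than the paper, which bounds only the variance and silently equates it with the mean-square error: your explicit $\mathcal{O}(N_j(k)^{-2})$ bound on the squared bias via \cref{asm:semivar} and your plan to control the linearization remainder fill real gaps in the printed argument. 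One small caution: \cref{prop:consistency} and \cref{asm:bins} are stated for the \emph{spatial} estimator $\hat\rho_s$, so invoking them for $\hat\rho_{t,j}$ requires stating the (straightforward) temporal analogues rather than citing them directly.
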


\begin{proof}
See \cite{estimation_book} for NLS asymptotic-normality result. 
\end{proof}

\begin{assumption}
\label{assump:clarity_lipschitz}
The clarity gain $\Delta q_j$ is Lipschitz continuous in the estimated temporal lengthscale. That is, there exists $L > 0$ such that for any allocation $n_j$,
\eqn{
|\Delta q_j(n_j; \hat{\lambda}_{t,j}) - \Delta q_j(n_j; \lambda_{t,j}^*)| \leq L \cdot |\hat{\lambda}_{t,j} - \lambda_{t,j}^*|.
}
\end{assumption}

\begin{assumption}
\label{assump:uniform_sampling}
At each iteration \( k \), region \( j \) receives at least \( c_k > 0 \) new samples, with \( c_k \geq c > 0 \) for all \( k \geq 1 \). Thus, the total number of samples satisfies \( N_j(k) \geq c_1 + \cdots + c_k \).
\end{assumption}

\begin{thm}
\label{thm:regret}
Under Lemma~\ref{lem:nls-rate}, Assumptions~\ref{assump:clarity_lipschitz}--\ref{assump:uniform_sampling}, the cumulative regret after $K$ iterations satisfies
\eqn{
R(K) = \mathcal{O}(\sqrt{K \log K}),
}
and the average regret per iteration vanishes: $R(K)/K \to 0$ as $K \to \infty$.
\end{thm}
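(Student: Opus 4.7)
The plan is to bound the per-iteration regret by the temporal-lengthscale estimation error, then aggregate across iterations using the NLS rate from~\cref{lem:nls-rate}. Let $\mathbf{n}^{*}(k)$ denote the oracle allocation (computed under $\lambda^{*}_{t,j}$) and $\mathbf{n}(k)$ the allocation returned by the MILP~\eqref{eq:task_allocation_prob} under $\hat\lambda_{t,j}(k)$, and write $\widetilde{\Delta q}_j(\mathbf{n};\hat\lambda)$ for the clarity gain predicted by the algorithm. The first step is an add-and-subtract decomposition: inserting $\sum_j\widetilde{\Delta q}_j(\mathbf{n}^{*}(k);\hat\lambda(k))$ and $\sum_j\widetilde{\Delta q}_j(\mathbf{n}(k);\hat\lambda(k))$ into $r(k)$ and observing that $\mathbf{n}(k)$ is optimal for the estimated objective (so the cross-difference is nonpositive) gives
\begin{equation*}
r(k) \le \sum_{j=1}^{M}\bigl|\Delta q_j(\mathbf{n}^{*}(k);\lambda^{*}) - \widetilde{\Delta q}_j(\mathbf{n}^{*}(k);\hat\lambda(k))\bigr| + \sum_{j=1}^{M}\bigl|\widetilde{\Delta q}_j(\mathbf{n}(k);\hat\lambda(k)) - \Delta q_j(\mathbf{n}(k);\lambda^{*})\bigr|.
\end{equation*}

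Next, I would apply~\cref{assump:clarity_lipschitz} to each summand, yielding the clean bound $r(k)\le 2L\sum_{j=1}^{M}|\hat\lambda_{t,j}(k)-\lambda^{*}_{t,j}|$. The remaining task is therefore to control the cumulative temporal-lengthscale error across epochs. To recover the stated $\sqrt{K\log K}$ rate—rather than the weaker $\sqrt{K}$ that a direct expectation-plus-Jensen argument would give—I would upgrade~\cref{lem:nls-rate} from a mean-square statement to a high-probability statement. Combining the asymptotic normality underpinning~\cref{lem:nls-rate} with a sub-Gaussian tail inequality and a union bound over $k=1,\dots,K$ yields, with probability at least $1-\delta$,
\begin{equation*}
|\hat\lambda_{t,j}(k)-\lambda^{*}_{t,j}| \le \tilde C\,\sqrt{\log(K/\delta)/N_j(k)} \quad \forall\,k\le K,\; j\in\Tcal.
\end{equation*}

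Invoking~\cref{assump:uniform_sampling} to substitute $N_j(k)\ge c\,k$ and summing $\sum_{k=1}^{K}k^{-1/2}=\mathcal{O}(\sqrt K)$ then delivers
\begin{equation*}
R(K) \le 2LM\tilde C\,\sqrt{\log(K/\delta)/c}\;\sum_{k=1}^{K}k^{-1/2} \;=\; \mathcal{O}\!\bigl(\sqrt{K\log K}\bigr),
\end{equation*}
and dividing by $K$ yields $R(K)/K=\mathcal{O}(\sqrt{(\log K)/K})\to 0$, which is the vanishing average-regret claim.

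The hard part will be the concentration step that provides the $\sqrt{\log K}$ factor. The co-temporal measurements used by the variogram fit in each epoch are neither i.i.d.\ across epochs nor independent across bins (data accumulates adaptively, and pairs inside a bin share endpoints), so an off-the-shelf Hoeffding/Bernstein bound does not apply directly. I would handle this by casting the NLS normal equations as a martingale-difference sequence and invoking an Azuma- or Freedman-type inequality, or alternatively by appealing to mixing-style concentration results for variogram estimators. Once that tail bound is established, everything downstream—the Lipschitz reduction, the $k^{-1/2}$ summation, and the MILP optimality argument—is essentially bookkeeping.
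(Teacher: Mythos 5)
Your proposal follows essentially the same route as the paper's proof: the same add-and-subtract decomposition into an allocation gap (killed via optimality of the allocation under the estimated lengthscale) and an estimation gap, the same Lipschitz reduction to $\mathcal{O}(|\hat\lambda_{t,j}(k)-\lambda^{*}_{t,j}|)$, the same high-probability upgrade of the NLS rate via a sub-Gaussian tail plus a union bound over $k$ and $j$, and the same $\sum_k k^{-1/2}=\mathcal{O}(\sqrt K)$ summation. The one place you are actually more careful than the paper is the concentration step: the paper simply \emph{assumes} the inequality $\mathbb{P}(|\hat\lambda_{t,j}(k)-\lambda^{*}_{t,j}|>\epsilon_k)\le\exp(-\alpha N_j(k)\epsilon_k^2)$ without addressing the dependence across adaptively collected, endpoint-sharing pairs that you correctly flag as the genuine technical obstacle.
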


\begin{proof} 
The proof is constructed as follows. First, since each region accrues at least one new concurrent sample per iteration, the NLS estimator obeys $\bigl|\hat\lambda_{t,j}(k)-\lambda_{t,j}^{*}\bigr|=\mathcal{O}(k^{-1/2})$ with high probability.  Uniform $L$-Lipschitz continuity of the clarity-gain function transfers this rate to the predicted gains.  Decomposing the instantaneous regret into an \emph{allocation gap} and an \emph{estimation gap} shows both are bounded by $L\bigl|\hat\lambda_{t,j}-\lambda_{t,j}^{\!*}\bigr|$, yielding $r(k)=\mathcal{O}\bigl(\sqrt{\log K}/\sqrt{k}\bigr)$.  Summing $\sum_{k=1}^{K}k^{-1/2}=\mathcal{O}(\sqrt{K})$ gives the cumulative bound $R(K)=\mathcal{O}\bigl(\sqrt{K\log K}\bigr)$, so the average regret $R(K)/K$ vanishes as $K\to\infty$.  The formal proof is given as follows:

By Lemma~\ref{lem:nls-rate}, the variance of the temporal lengthscale estimator satisfies
\eqn{
\operatorname{Var}[\hat{\lambda}_{t,j}(k)] = \mathcal{O}\left( \frac{1}{N_j(k)} \right),
}
which implies the standard deviation scales as
\eqn{
|\hat{\lambda}_{t,j}(k) - \lambda_{t,j}^*| = \mathcal{O}(N_j(k)^{-1/2}).
}
Now, using Assumption~\ref{assump:uniform_sampling}, we have
\eqn{
N_j(k) \geq c_1 + \cdots + c_k \geq c \cdot k,
}
for some constant \( c > 0 \). Therefore, we can write
\eqn{
|\hat{\lambda}_{t,j}(k) - \lambda_{t,j}^*| = \mathcal{O}(k^{-1/2}),
}
up to constants.

Then, by Assumption~\ref{assump:clarity_lipschitz}, we apply the Lipschitz property
\eqn{
\left| \Delta q_j(; \hat{\lambda}_{t,j}(k)) - \Delta q_j(; \lambda_{t,j}^*) \right| \leq L \cdot \mathcal{O}(k^{-1/2}) = \mathcal{O}(k^{-1/2}).\footnote{We write the bound as equality up to constants because the $\mathcal{O}(\cdot)$ notation absorbs constant multipliers such as the Lipschitz constant $L$.}
}

Now define the instantaneous regret at iteration $k$ as
\eqn{
r(k) := \sum_{j=1}^M \left(
\Delta q_j(n_j^*(k); \lambda_{t,j}^*) 
- \Delta q_j(n_{jr}(k); \hat{\lambda}_{t,j}(k)) \right).
}

Add and subtract $\Delta q_j(n_{jr}(k); \lambda_{t,j}^*)$ within each term:
\begin{align}
r&(k) = \sum_{j=1}^M \bigg[
\Delta q_j(n_j^*(k); \lambda_{t,j}^*) 
- \Delta q_j(n_{jr}(k); \lambda_{t,j}^*)\bigg] \nonumber\\
& + \sum_{j=1}^M \bigg[ 
\Delta q_j(n_{jr}(k); \lambda_{t,j}^*) 
- \Delta q_j(n_{jr}(k); \hat{\lambda}_{t,j}(k)) \bigg].
\end{align}

Apply the triangle inequality:
\begin{align}
|&r(k)| \leq \sum_{j=1}^M \left| 
\Delta q_j(n_j^*(k); \lambda_{t,j}^*) 
- \Delta q_j(n_{jr}(k); \lambda_{t,j}^*)\right| \nonumber \\
& + \sum_{j=1}^M \left| \Delta q_j(n_{jr}(k); \lambda_{t,j}^*) 
- \Delta q_j(n_{jr}(k); \hat{\lambda}_{t,j}(k))\right|.
\end{align}

Because \(n_j^{*}(k)\) maximises
\(\Delta q_j(\cdot;\lambda^{\!*}_{t,j})\),
the first term is non-negative.
Add and subtract
\(\Delta q_j(n_j^{\!*}(k);\hat\lambda_{t,j}(k))\) and invoke the Lipschitz
property (\cref{assump:clarity_lipschitz}) to obtain
\begin{align}
0\;&\le\;
\Delta q_j\bigl(n_j^{\!*};\lambda^{*}\bigr)
-\Delta q_j\bigl(n_{jr};\lambda^{*}\bigr) \\
&\le
L\bigl|\hat\lambda_{t,j}(k)-\lambda^{\!*}_{t,j}\bigr|
= \mathcal{O}\bigl(k^{-1/2}\bigr).
\end{align}
The same bound holds for the estimation gap. Therefore,
\eqn{
r(k) \leq \sum_{j=1}^M \mathcal{O}(k^{-1/2}) = \mathcal{O}(k^{-1/2}).
}

Summing over $k = 1$ to $K$, the cumulative regret is
\eqn{
R(K) = \sum_{k=1}^K r(k) = \sum_{k=1}^K \mathcal{O}(k^{-1/2}).
}
We bound this sum using the integral:
\eqn{
\sum_{k=1}^K \frac{1}{\sqrt{k}} \leq \int_1^K \frac{1}{\sqrt{x}}\, dx + 1 = 2\sqrt{K} - 2 + 1 = \mathcal{O}(\sqrt{K}).
}

However, the bound $r(k) = \mathcal{O}(k^{-1/2})$ only holds with high probability (not uniformly) at each time step $k$ and for each region $j$. More precisely, define the failure event:
\eqn{
\text{failure}_{k,j} := \left\{ \left| \hat{\lambda}_{t,j}(k) - \lambda_{t,j}^* \right| > \epsilon_k \right\}.
}
Assume a concentration inequality of the form
\eqn{
\mathbb{P}\left( \left| \hat{\lambda}_{t,j}(k) - \lambda_{t,j}^* \right| > \epsilon_k \right) \leq \exp(-\alpha N_j(k) \epsilon_k^2),
}
where $\alpha$ is a positive constant. Set the failure probability at each $(k,j)$ to be at most $\delta'$, and solve for $\epsilon_k$:
\eqn{
\delta' = \exp(-\alpha N_j(k) \epsilon_k^2)
\quad \Rightarrow \quad 
\epsilon_k = \mathcal{O}\left( \sqrt{ \frac{ \log(1/\delta') }{ N_j(k) } } \right).
}
To ensure the bound holds \emph{uniformly} over all $K$ time steps and $M$ regions, apply Boole’s inequality:
\eqn{
\mathbb{P} \left( \bigcup_{k=1}^K \bigcup_{j=1}^M \text{failure}_{k,j} \right)
\leq KM \cdot \delta'.
}
To guarantee this probability is at most $\delta$, set $\delta' = \delta / (KM)$, yielding
\eqn{
\epsilon_k = \mathcal{O}\left( \sqrt{ \frac{ \log(KM/\delta) }{ N_j(k) } } \right)
= \mathcal{O}\left( \frac{ \sqrt{ \log(KM) } }{ \sqrt{k} } \right),
}
using $N_j(k) \geq c \cdot k$ from Assumption~\ref{assump:uniform_sampling}.

\begin{table}[ht]
\centering
\scriptsize
\begin{tabular}{c|c|c|c}
  \hline
  \makecell{\textbf{True} \\ $\lambda_s$} & \textbf{Method} 
                 & \makecell{\textbf{Estimated} \\ $\hat\lambda_s$ } 
                 & \makecell{\textbf{Mean Absolute} \\ \textbf{Percentage Error (\%)}} \\
  \hline
  \hline
  \multirow{2}{*}{0.267} & Greedy   & 0.403 ± 0.281 & 50.9\% \\
                         & \cellcolor{lightgray}Proposed & \cellcolor{lightgray}0.209 ± 0.019 & \cellcolor{lightgray}21.7\% \\
  \hline
  \multirow{2}{*}{0.333} & Greedy   & 0.389 ± 0.190 & 16.8\% \\
                         & \cellcolor{lightgray}Proposed & \cellcolor{lightgray}0.336 ± 0.115 & \cellcolor{lightgray}0.9\% \\
  \hline
  \multirow{2}{*}{0.500} & Greedy   & 0.551 ± 0.211 & 10.2\% \\
                         & \cellcolor{lightgray}Proposed & \cellcolor{lightgray}0.564 ± 0.169 & \cellcolor{lightgray}12.8\% \\
  \hline
  \multirow{2}{*}{0.667} & Greedy   & 0.552 ± 0.217 & 17.2\% \\
                         & \cellcolor{lightgray}Proposed & \cellcolor{lightgray}0.699 ± 0.131 & \cellcolor{lightgray}4.8\% \\
  \hline
  \multirow{2}{*}{0.833} & Greedy   & 0.689 ± 0.311 & 17.3\% \\
                         & \cellcolor{lightgray}Proposed & \cellcolor{lightgray}0.909 ± 0.153 & \cellcolor{lightgray}9.1\% \\
  \hline
\end{tabular}
\caption{Spatial lengthscale estimation results over 50 trials. Final column shows average percentage error $\left( \frac{|\hat\lambda_s - \lambda_s|}{\lambda_s} \times 100 \right)$.}
\label{tab:spatial_comparison}
\vspace{-17pt}
\end{table}
Thus, with probability at least $1 - \delta$, we have
\eqn{
\left| \hat{\lambda}_{t,j}(k) - \lambda_{t,j}^* \right| = \mathcal{O}\left( \frac{ \sqrt{\log(KM)} }{ \sqrt{k} } \right),
\quad \forall\, k,j.
}
By the Lipschitz continuity of the clarity dynamics, the per-step regret satisfies
\eqn{
r(k) = \mathcal{O}\left( \frac{ \sqrt{ \log(KM) } }{ \sqrt{k} } \right).
}

We now sum over $k = 1$ to $K$ to obtain the cumulative regret:
\begin{subequations}
\begin{align}
R(K)
&= \sum_{k=1}^K r(k)
= \sum_{k=1}^K \mathcal{O}\left( k^{-1/2} \cdot \sqrt{\log(KM)} \right) \\
&= \mathcal{O}\left( \sqrt{\log(KM)} \cdot \sum_{k=1}^K k^{-1/2} \right) \\
&= \mathcal{O}\left( \sqrt{K \log K} \right).
\end{align}

\end{subequations}

Finally, the average regret satisfies
\eqn{
\frac{R(K)}{K} = \mathcal{O}\left( \sqrt{ \frac{ \log(KM) }{K} } \right)
\to 0 \quad \text{as } K \to \infty.
}
\end{proof}

\section{Results \& Discussion}\label{sec:results}

This section validates our two-phase framework. We show that: (i) the adaptive variogram method accurately estimates region-specific spatial lengthscales, (ii) clarity-aware allocation improves robot distribution over time, and (iii) the full framework enhances field reconstruction. Implementation uses \textit{SpatiotemporalGPs.jl}~\cite{agrawal2024multi} for ground truth generation and \textit{DelaunayTriangulation.jl}~\cite{VandenHeuvel2024DelaunayTriangulation} for Voronoi partitioning.

\begin{figure*}[t]
  \centering
   \captionsetup{font=footnotesize}
  \includegraphics[width=2.00\columnwidth]{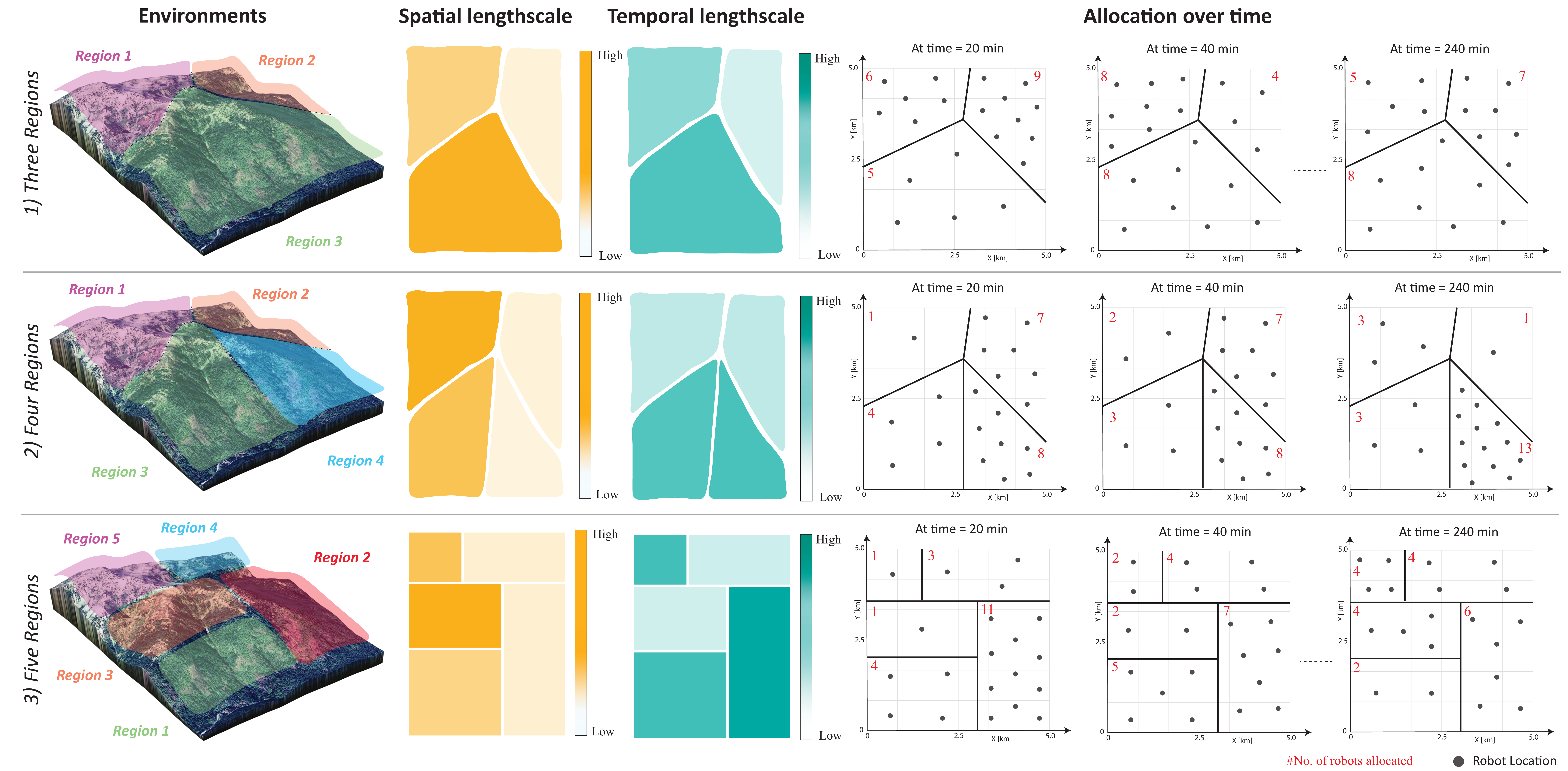}
  \caption{Robot allocation over time in three environments. Initial assignment reflects spatial lengthscales; later shifts adapt to clarity decay and learned temporal dynamics.}
  \vspace{-17pt}
  \label{fig:comp_allocation}
\end{figure*}

\subsection{Spatial Lengthscale Learning}

We compare our adaptive variogram-based sampling strategy to a greedy baseline that selects sample pairs with the largest residuals, neglecting spatial lag coverage. The goal is to estimate region-specific spatial lengthscales $\lambda_s$.

As shown in Table I, our method consistently outperforms the greedy baseline over 50 trials—particularly for small $\lambda_s$, where the lack of long-range samples in the greedy strategy leads to overestimation and high variance. For instance, at $\lambda_s = 0.267$, the greedy estimate is $0.403 \pm 0.281$, while our method achieves $0.209 \pm 0.019$.

By balancing variance reduction with spatial lag coverage, our method avoids overfitting to short-range structure—a common failure mode in greedy approaches. As $\lambda_s$ increases, both methods improve and the performance gap narrows, since smooth fields are easier to estimate and require less spatial diversity. Although \cref{lemma:variance_exact} provides an analytical expression for $\mathrm{Var}[\hat\lambda_s]$, it rests on idealized assumptions such as uncorrelated residuals and independent sample pairs. In practice, particularly in very smooth or rapidly varying fields, these assumptions may not hold—potentially causing the variance estimate to under- or overestimate true error. Thus, it serves primarily as a heuristic to guide adaptive sampling.

\subsection{Clarity-Aware Task Allocation Over Time}

\Cref{fig:comp_allocation} shows how robot allocation evolves across three environments with increasing spatial and temporal variability. Early allocation reflects spatial lengthscales and an initial assumption of fast temporal decay, leading to a relatively uniform distribution where regions with smaller spatial lengthscales (lower spatial correlation) receive more robots. As temporal lengthscales are learned, robots are reallocated based on clarity decay and saturation.

In Environment 1, Region 3 receives the most robots due to its large area, despite being smooth and slow-changing. In Environment 2, Region 1 starts with minimal allocation but gains robots once other regions are learned to decay more slowly,  highlighting the need for more frequent updates in Region 1 due to its smaller temporal lengthscale. Region 4 consistently receives many robots due to its extremely small spatial lengthscale, requiring dense sampling.
\begin{table}[ht]
\centering
\scriptsize
\begin{tabular}{c|c|c|c|c|c}
\hline
\textbf{Env} & \textbf{Method} 
& \textbf{Mean Clarity} & \textbf{RMSE} & \textbf{SMSE} & \textbf{NLPD} \\
\hline \hline
\multirow{2}{*}{1 (20)} 
& Baseline 1         &0.580    & 0.632     & 0.468    & 0.996     \\
& \cellcolor{lightgray} Proposed & \cellcolor{lightgray} 0.658 & \cellcolor{lightgray} 0.638 & \cellcolor{lightgray} 0.479 & \cellcolor{lightgray} 0.873 \\
\hline
\multirow{2}{*}{1 (50)} 
& Baseline 1         & 0.641    & 0.552     & 0.358    & 0.864     \\
& \cellcolor{lightgray} Proposed & \cellcolor{lightgray} 0.724 & \cellcolor{lightgray} 0.545 & \cellcolor{lightgray} 0.349 & \cellcolor{lightgray} 0.695 \\
\hline
\multirow{2}{*}{2 (20)} 
& Baseline 1         & 0.376 & 0.842  & 0.796    & 1.52      \\
& \cellcolor{lightgray} Proposed & \cellcolor{lightgray} 0.428 & \cellcolor{lightgray} 0.912 & \cellcolor{lightgray} 0.533 & \cellcolor{lightgray} 1.33 \\
\hline
\multirow{2}{*}{2 (50)} 
& Baseline 1         & 0.428   & 0.767 & 0.659    & 1.43      \\
& \cellcolor{lightgray} Proposed & \cellcolor{lightgray} 0.524 & \cellcolor{lightgray} 0.720 & \cellcolor{lightgray} 0.332 & \cellcolor{lightgray} 1.08 \\
\hline
\end{tabular}
\caption{Comparison of average clarity and reconstruction metrics across environments. Env(No.) represents the number of robots in the Env. The mean clarity of each region is shown.} 
\label{tab:estimate_comparison}
\vspace{-10pt}
\end{table}

In Environment 3, Region 2 is prioritized early due to low spatial correlation but loses robots after its slow temporal decay is learned. Region 3, with smaller spatial and temporal lengthscales, attracts more robots over time. Region 4 sees increased allocation later, as other regions saturate and its marginal clarity gain remains high. 

\subsection{Final Clarity and Field Estimation Accuracy}

\cref{tab:estimate_comparison} compares Baseline 1 and the proposed method across two environments using clarity, Root Mean Squared Error (RMSE), Standardized Mean Squared Error
 (SMSE), and Negative Log Predictive Density (NLPD). The proposed method achieves the best performance across all metrics. Baseline 1 assumes a globally small spatial lengthscale, ignoring spatial correlation and producing conservative estimates—especially in smooth regions where fewer samples would suffice. In contrast, the proposed method combines region-specific spatial modeling with clarity-aware allocation, focusing sensing where temporal change is rapid or spatial correlation is low. This results in a higher average clarity over regions, improved uncertainty quantification—reflected in lower NLPD—and consistent gains across all metrics.

\subsection{Hardware Demonstration}
We demonstrate real-time region allocation and reassignment using 10 Crazyflie 2.0 quadrotors, implemented with the Crazyswarm 1.0 ROS Noetic package. Three Crazyradio PA's handle communication, and 15 Vicon cameras provide localization. Goal positions are sent to the robots, which adjust their motion accordingly. Experiments in Environments 1 and 2 use the lengthscale configuration in~\cref{fig:comp_allocation}. The field evolves on a time scale of seconds to minutes, with region assignments updated every 10 seconds based on clarity dynamics. These results confirm that robots can track and adapt to changing assignments in real time. 

\section{Conclusion, Limitations, and Future Work}
We proposed a two-phase framework for multi-robot information gathering in non-uniform spatiotemporal environments. By estimating region-specific spatial and temporal lengthscales from concurrent data and using clarity-driven allocation, the method improves estimation and adaptively prioritizes sensing. The current approach assumes known region partitions and sufficient data, which may be unrealistic under resource or time constraints. Future work includes data-driven region discovery, sample complexity analysis, and extension to decentralized settings with partial observability and limited communication.

\nocite{*}
\bibliographystyle{IEEEtran}
\bibliography{main.bib}

\appendix
\subsection{Extended derivation of \cref{lemma:variance_exact}}

We estimate the parameter $\rho$ of a variogram model $ \gamma(h;\rho) $ by fitting it to empirical semivariances at $N_b$ lag bins $h_1,\dots,h_{N_b}$. The empirical variogram is defined as:
\eqn{
\hat\gamma(h_b)
\;=\;
\frac{1}{2\,n_b}\sum_{(i,j)\in \mathcal{P}_b}\bigl(Y(x_i)-Y(x_j)\bigr)^2,
\quad
b=1,\dots,N_b,
}
where $\mathcal{P}_b$ denotes the set of co-temporal measurement pairs with spatial lag in bin $b$, and $n_b = |\mathcal{P}_b|$ is the number of such pairs.

\subsection*{1. Least squares estimator}

We fit a variogram model by minimizing the unweighted squared error:
\eqn{
L(\rho) =
\sum_{b=1}^{N_b}
\bigl[\hat\gamma(h_b)\,-\,\gamma(h_b;\rho)\bigr]^2.
}
Then the estimator is:
\eqn{
\hat\rho
=
\arg\min_{\rho}L(\rho).
}

\subsection*{2. First‐order optimality condition}

Differentiate $L(\rho)$ w.r.t.\ $\rho$ and set the result to zero at $\rho=\hat\rho$:
\begin{align}
\frac{dL}{d\rho}\Big|_{\rho=\hat\rho}
&=
\sum_{b=1}^{N_b} 2\,\bigl[\hat\gamma(h_b)-\gamma(h_b;\hat\rho)\bigr]
\;(-)\,\frac{\partial\gamma(h_b;\rho)}{\partial\rho}\Big|_{\rho=\hat\rho}
=0.
\end{align}

Define the residual vector $\mathbf{r}(\hat{\rho})\in\mathbb{R}^{N_b}$ and the Jacobian $\mathbf{J}(\hat{\rho})\in\mathbb{R}^{N_b\times1}$ by:
\eqn{
r_b(\hat{\rho}) = \hat\gamma(h_b) - \gamma(h_b;\hat{\rho}),
\qquad
J_b = \frac{\partial\gamma(h_b;\rho)}{\partial\rho}\Big|_{\rho=\hat{\rho}},
}
where $\rho_0$ is the true parameter. Then the first‐order condition becomes:
\eqn{
\mathbf{J}(\hat{\rho})^\top\,\mathbf{r}(\hat\rho)=0.
}

\subsection*{3. Taylor expansion of the residuals}

Expand $\mathbf{r}(\hat\rho)$ around $\rho_0$ via a first‐order Taylor approximation:
\eqn{
\mathbf{r}(\hat\rho)
=
\mathbf{r}(\rho_0)
+ 
\left.\frac{d\mathbf{r}}{d\rho}\right|_{\rho=\rho_0}
\,(\hat\rho-\rho_0)
+
\text{higher‐order terms}.
}
But
\eqn{
\frac{d\mathbf{r}}{d\rho}\Big|_{\rho_0}
= -\mathbf{J},
}
so neglecting higher-order terms,
\eqn{
\mathbf{r}(\hat\rho)
\approx
\mathbf{r}(\rho_0) \;-\;\mathbf{J}\,(\hat\rho-\rho_0).
}

\subsection*{4. Solve for \(\hat\rho-\rho_0\)}

Substitute into the first-order condition:
\eqn{
0
&=
\mathbf{J}^\top\!\left[\mathbf{r}(\rho_0)
- \mathbf{J}\,(\hat\rho-\rho_0)\right] \\
&=
\mathbf{J}^\top\mathbf{r}(\rho_0)
\;-\;
\mathbf{J}^\top\mathbf{J}\;(\hat\rho-\rho_0).
}
Rearrange:
\eqn{
\hat\rho-\rho_0
=
\bigl(\mathbf{J}^\top\mathbf{J}\bigr)^{-1}
\,
\mathbf{J}^\top\,\mathbf{r}(\rho_0).
}

\subsection*{5. Exact Variance}

Assume the empirical semivariances across bins are uncorrelated. Since each
\eqn{
\hat\gamma(h_b)
=\frac{1}{2\,n_b}\sum_{(i,j)\in \mathcal{P}_b}(Y_i - Y_j)^2
}
is an average over $n_b$ terms, we apply the variance‐of‐a‐mean rule:
\eqn{
\mathrm{Var}\bigl[\hat\gamma(h_b)\bigr]
= \frac{\sigma_b^2}{n_b}.
}
where $\sigma_b^2$ is the sample variance of squared differences in bin $b$. Alternatively, $\sigma_b^2$ is the variance of each squared term inside the bin.

Thus, the covariance of the residual vector $\mathbf{r}(\rho_0)$ is:
\eqn{
\mathrm{Cov}[\mathbf{r}(\rho_0)] = \Sigma,
\qquad
\Sigma = \mathrm{diag}\left(\tfrac{\sigma_1^2}{n_1},\dots,\tfrac{\sigma_{N_b}^2}{n_{N_b}}\right).
}

From earlier, we have the linear relationship:
\eqn{
\hat\rho-\rho_0
= A\,\mathbf{r}(\rho_0),
\quad
A = (\mathbf{J}^\top \mathbf{J})^{-1} \mathbf{J}^\top.
}
Applying the rule $\mathrm{Var}[A\,x] = A\,\mathrm{Cov}(x)\,A^\top$, we obtain:
\eqn{
\mathrm{Var}[\hat\rho]
= (\mathbf{J}^\top \mathbf{J})^{-1}
\,\mathbf{J}^\top\, \Sigma\, \mathbf{J}
\,(\mathbf{J}^\top \mathbf{J})^{-1}.
}

This completes the detailed derivation of the variance upper bound.

\subsection{Proof of \cref{lem:nls-rate}}

\begin{proof}
From \cref{lemma:variance_exact}, the variance of the least-squares estimator is given by:
\[
\mathrm{Var}[\lambda_{t,j}] = (J^\top J)^{-1} J^\top \Sigma J (J^\top J)^{-1},
\]
where $\Sigma \in \mathbb{R}^{N_b \times N_b}$ is a diagonal matrix whose $b$-th entry is the variance of the residuals in bin $b$ divided by the number of samples in that bin:
\[
\Sigma_{bb} = \frac{\sigma_b^2}{n_b}, \quad \text{with} \quad n_b \approx \frac{N_j(k)}{N_b}.
\]
By assumption, $\sigma_b^2 \leq \sigma_{\max}^2$ for all bins $b$, and $N_b$ is constant. Therefore, each diagonal entry of $\Sigma$ satisfies
\[
\Sigma_{bb} \leq \frac{\sigma_{\max}^2}{N_j(k)/N_b} = \frac{\sigma_{\max}^2 N_b}{N_j(k)}.
\]
Hence, the entire matrix $\Sigma$ scales as:
\[
\Sigma = \mathcal{O}(1/N_j(k)).
\]

Additionally, since $J$ is full rank and is assumed to have bounded entries, the product $J^\top J$ is invertible and its smallest eigenvalue is bounded away from zero, which implies:
\[
(J^\top J)^{-1} = \mathcal{O}(1), \quad \text{and} \quad J^\top \Sigma J = \mathcal{O}(1/N_j(k)).
\]
Putting these together:
\[
\mathrm{Var}[\lambda_{t,j}] = (J^\top J)^{-1} \left(J^\top \Sigma J\right)(J^\top J)^{-1} = \mathcal{O}(1/N_j(k)),
\]
and taking the square root yields:
\[
\mathrm{Std}[\lambda_{t,j}] = \mathcal{O}(N_j(k)^{-1/2}).
\]

Thus, the estimator converges to $\lambda_{t,j}^*$ in the mean-square sense. In particular, for any $\epsilon > 0$, we have
\[
\mathbb{P}\left( \left| \hat{\lambda}_{t,j}(k) - \lambda_{t,j}^* \right| > \epsilon \right) \to 0 \quad \text{as } N_j(k) \to \infty,
\]
i.e., $\hat{\lambda}_{t,j}(k) \to \lambda_{t,j}^*$ in probability. \qedhere
\end{proof}

\subsection{Some definitions}

\begin{definition}[Almost Sure Convergence]
Let \( X_1, X_2, \dots \) be a sequence of random variables defined on a probability space \( (\Omega, \mathcal{F}, \mathbb{P}) \). We say that \( X_n \) converges \textbf{almost surely} to a random variable \( X \), written
\[
X_n \xrightarrow{\text{a.s.}} X,
\]
if
\eqn{
\mathbb{P}\left( \lim_{n \to \infty} X_n(\omega) = X(\omega) \right) = 1.
}
\end{definition}

\begin{definition}[Strong Law of Large Numbers]
Let \( X_1, X_2, \dots \) be a sequence of independent and identically distributed (i.i.d.) random variables with finite mean \( \mu = \mathbb{E}[X_i] < \infty \). Then, the sample average converges almost surely to the expected value:
\[
\frac{1}{n} \sum_{i=1}^n X_i \xrightarrow{\text{a.s.}} \mu \quad \text{as } n \to \infty.
\]
That is,
\[
\mathbb{P} \left( \lim_{n \to \infty} \frac{1}{n} \sum_{i=1}^n X_i = \mu \right) = 1.
\]
\end{definition}

\end{document}